\documentclass{amsart}
\usepackage[marginratio=1:1]{geometry}

\usepackage[hidelinks]{hyperref}
\usepackage{calc}
\newsavebox\CBox
\newcommand\hcancel[2][0.5pt]{%
  \ifmmode\sbox\CBox{$#2$}\else\sbox\CBox{#2}\fi%
  \makebox[0pt][l]{\usebox\CBox}%
  \rule[0.5\ht\CBox-#1/2]{\wd\CBox}{#1}}
\usepackage{blindtext}
\usepackage{color}
\usepackage{hyperref,url}
\usepackage{float}
\usepackage{hyperref}
\usepackage{enumerate}
\usepackage{enumitem}   
\usepackage{bbm}
\usepackage{cancel}
\usepackage{mathrsfs}
\usepackage{colonequals}
\usepackage{pdfpages}

\overfullrule=0pt
\sloppy

\usepackage{amsmath,amsfonts,amsthm,bm}
\usepackage{mathrsfs}  
\usepackage{xcolor}
\usepackage{amsfonts}
\usepackage{amssymb}
\usepackage{amsmath}
\usepackage{mathtools}
\usepackage{mathrsfs}  

\usepackage[normalem]{ulem}

\numberwithin{equation}{section}
\usepackage[toc,page]{appendix}
\usepackage{tikz-cd}
\theoremstyle{definition}

\newtheorem{definition}{Definition}[section]
\newtheorem{remark}[definition]{Remark}

\theoremstyle{plain}

\newtheorem{theorem}[definition]{Theorem}

\newtheorem{lemma}[definition]{Lemma}

\usepackage{hyperref}


\newcommand{\R}{\mathbb R}
\newcommand{\RR}{\mathbb R}

\newcommand{\ReLU}{\mathrm{ReLU}}

\usepackage{scalerel}

\definecolor{roxo}{rgb}{0.44, 0.16, 0.39}
\definecolor{ao(english)}{rgb}{0.0, 0.5, 0.0}
\definecolor{dmagenta}{RGB}{139, 0, 139}
\definecolor{dgreen}{RGB}{0,90,0}
\definecolor{navy}{RGB}{0,0,128}

\usepackage{stackengine}

\definecolor{iblue}{RGB}{0, 35, 194}
\title[Optimal Approximation Complexity of  Neural Networks]{Optimal Approximation Complexity of High-Dimensional Functions with Neural Networks}
\author[V.P.H. Goverse, J. Hamdan, J. Tanner]
{Vincent P.H. Goverse$^{*}$, Jad Hamdan$^{\dagger}$, and Jared Tanner$^{\dagger}$
}

\address{Department of Mathematics, Imperial College London \& Mathematical Institute, University of Oxford, United Kingdom}
\email{ vincent.goverse21@imperial.ac.uk, hamdan@maths.ox.ac.uk, tanner@maths.ox.ac.uk}

\textwidth=14.5truecm
\allowdisplaybreaks
\begin{document}

\subjclass[2020]{41A10}

\keywords{Machine Learning, Universal Approximation, bi-activation, Neural Networks }

\maketitle
\vspace{-1.3cm}
\begin{align*}
   &\small {}^* \textit{Imperial College London}\\
   &\small {}^\dagger \textit{University of Oxford}
\end{align*}
\begin{abstract}

We investigate properties of neural networks that use both ReLU and $x^2$ as activation functions and build upon previous results to show that both analytic functions and functions in Sobolev spaces can be approximated by such networks of constant depth to arbitrary accuracy, demonstrating optimal order approximation rates across all nonlinear approximators, including standard ReLU networks. We then show how to leverage low local dimensionality in some contexts to overcome the curse of dimensionality, obtaining approximation rates that are optimal for unknown lower-dimensional subspaces.
\end{abstract}

\section{Introduction}
\label{submission}

The number of parameters needed to approximate smooth high-dimensional functions, $W^{n,\infty}([0,1]^d)$, within a prescribed $\epsilon$ accuracy in the $\ell_{\infty}$  norm was lower bounded by \cite{devore} to have a dependence on $\epsilon$ that is proportional to $\epsilon^{-d/n}$. \cite{yarotsky} has subsequently shown that a simple feedforward neural network with $\ReLU(x):=\max\{0,x\}$ nonlinear activation is nearly optimal in terms of the number of parameters needed, requiring only $c(n,d)=\epsilon^{-d/n}\log(1/\epsilon)$ parameters\footnote{The function $c(n,d)$ depends on the smoothness, $n$, and the dimension of $f(x)$, but not on the desired accuracy $\epsilon$.}, see \cite{yarotsky}[Theorem 1]. Subsequently, \cite{boulle} reduced the number of parameters needed by a feedforward neural network to achieve $\epsilon$ accuracy to being proportional to $c(n,d)=\epsilon^{-d/n}\log(\log(1/\epsilon))$ by using trainable rational function as nonlinear activations, see \cite{boulle}[Theorem 4].  

Here we further adapt the proof by Yarotsky to achieve the optimal dependence of $\epsilon^{-d/n}$ proven by \cite{devore}, using a feedforward network that makes use of {\em two} nonlinear activations (henceforth referred to as {\em bi-activation} networks).  Specifically, we allow some layers to use the ReLU nonlinear activation to localize $f(x)$ through a partition of unity, and the quadratic activation $x^2$ to allow for efficient computation of localized high degree polynomial approximations.

Specifically, following the notation of \cite{devore} and \cite{yarotsky}, we consider nonlinear approximation methods $M_p(a)$ that have a continuous dependence\footnote{The continuous dependence of $M_n(a)$ on $a$ is introduced in \cite{devore} to avoid space filling curves and can be viewed as ensuring the parameters $a$ can be learned from a sufficiently near estimate; for details see \cite{devore}.} on the $p$ parameters $a\in\R^p$ and which approximate high dimensional functions $f(\cdot)$ within the unit ball of the Sobolev space $W^{n,\infty}([0,1]^d)$, 
\begin{equation}\label{sobolev}
    ||f||_{W^{n,\infty}([0,1]^d)}=\text{max }_{\textbf{n}\in|\textbf{n}|\leq n}\text{esssup}_{\textbf{x}\in[0,1]^d} |D^{\textbf{n}}f(\textbf{x})|
\end{equation}
where $\textbf{n}=(n_1,...,n_d)\in \{0,1,...\}^d$, $|\textbf{n}|=\sum_i^d n_i$ and $D^{\textbf{n}}f$ the respective weak derivative.  The foundational lower bound on the number of elements in {\em any} nonlinear approximation method $M_p(a)$ that depends smoothly on $a\in\RR^p$ is given in Theorem \ref{thm:devore}.
\begin{theorem}[Optimal non-linear approximation lower bound, \cite{devore}]\label{thm:devore}
For function $f(x)$ with $||f||_{W^{n,\infty}([0,1]^d)}\le 1$, and $M_p(a)$ depending continuously on $a\in\RR^p$, approximating $f(x)$ with bound
\[
\inf_{M_p(a),a}\max_{x\in [0,1]^d} |f(x)-M_p(a)(x)|\le \epsilon
\]
then necessarily $M_p(\cdot)$ has $p\ge C_1(d,n) \epsilon^{-d/n}$ where $C_1(d,n)$ may depend on $d$ and $n$, but not on $\epsilon$.
\end{theorem}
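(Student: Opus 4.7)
The plan is to prove the lower bound by reducing to a finite-dimensional Borsuk--Ulam argument applied to a carefully chosen family of localised ``bump'' functions, following the strategy pioneered in \cite{devore}. I would first set $\delta \asymp \epsilon^{1/n}$ and tile $[0,1]^d$ with $N \asymp \delta^{-d} \asymp \epsilon^{-d/n}$ disjoint congruent sub-cubes $\{Q_i\}_{i=1}^N$. Starting from a fixed smooth bump $\phi$ supported in $[0,1]^d$ with peak value $c_0 > 0$, I translate and $\delta$-rescale $\phi$ to produce $\phi_i$ supported in $Q_i$; under this scaling, $k$-th order derivatives pick up a factor $\delta^{-k}$, so after multiplying by $\delta^n$ (and by a fixed constant) I obtain $\tilde\phi_i$ with $\|\tilde\phi_i\|_{W^{n,\infty}([0,1]^d)} = 1$ and $\|\tilde\phi_i\|_{L^\infty} = c_0 \delta^n$.

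Since the $\tilde\phi_i$ have pairwise disjoint supports, for any $\sigma = (\sigma_1, \ldots, \sigma_N) \in \R^N$ the function $f_\sigma := \sum_{i=1}^N \sigma_i \tilde\phi_i$ satisfies $\|f_\sigma\|_{W^{n,\infty}} = \max_i |\sigma_i|$ and $\|f_\sigma\|_{L^\infty} = c_0 \delta^n \max_i |\sigma_i|$. Restricting $\sigma$ to the boundary of the cube $K := \partial[-1,1]^N$, which is homeomorphic to $S^{N-1}$ via a homeomorphism commuting with the antipodal map $\sigma \mapsto -\sigma$, ensures that each such $f_\sigma$ lies in the unit Sobolev ball and $\|f_\sigma\|_{L^\infty} = c_0 \delta^n$.

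Next I argue by contradiction: suppose $p < N$. The continuous dependence of $M_p(a)$ on $a$, together with a continuous parameter-selection rule $\sigma \mapsto a(\sigma) \in \R^p$ (the substantive content of DeVore's continuity hypothesis), yields a continuous map $K \to \R^p$. Applying the Borsuk--Ulam theorem produces $\sigma_\ast \in K$ with $a(\sigma_\ast) = a(-\sigma_\ast)$, whence the triangle inequality gives
\[
2 c_0 \delta^n \;=\; \|f_{\sigma_\ast} - f_{-\sigma_\ast}\|_{\infty} \;\le\; \|M_p(a(\sigma_\ast)) - f_{\sigma_\ast}\|_{\infty} + \|M_p(a(-\sigma_\ast)) - f_{-\sigma_\ast}\|_{\infty} \;\le\; 2\epsilon.
\]
Choosing $\delta$ just slightly larger than $(\epsilon/c_0)^{1/n}$ violates this inequality, forcing $p \ge N \asymp \epsilon^{-d/n}$, which yields the advertised bound with $C_1(d,n) \asymp c_0^{d/n}$.

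The main obstacle is justifying the continuity of the selection $\sigma \mapsto a(\sigma)$: pointwise a near-optimal $a$ always exists, but a naive assignment need not be continuous, and permitting arbitrary (e.g.\ merely measurable) selections would enable space-filling encodings that defeat the $\epsilon^{-d/n}$ floor, precisely as the footnote following the theorem warns. This is why continuity must be built into the class of admissible $(M_p, a)$ pairs from the outset; once this is done, the topological step reduces cleanly to classical Borsuk--Ulam and the counting collapses to the volume estimate $N \asymp \delta^{-d}$. A secondary point to verify is that $\|f_\sigma\|_{W^{n,\infty}} = \max_i |\sigma_i|$, which follows from the disjointness of supports but must be checked uniformly across all multi-indices $\textbf{n}$ with $|\textbf{n}| \le n$ and absorbed into the choice of the normalisation constant.
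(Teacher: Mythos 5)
The paper does not prove Theorem \ref{thm:devore}: it is imported verbatim from \cite{devore}, so there is no in-paper argument to compare against. Your reconstruction --- disjointly supported $\delta$-rescaled bumps normalised into the unit ball of $W^{n,\infty}$, the sign-pattern sphere $\partial[-1,1]^N$, continuity of the parameter selection $\sigma\mapsto a(\sigma)$, and Borsuk--Ulam to produce $a(\sigma_\ast)=a(-\sigma_\ast)$ and hence the contradiction $2c_0\delta^n\le 2\epsilon$ --- is exactly the Bernstein-width argument of that reference and is correct, including your correct identification of the continuity hypothesis as the step that excludes space-filling selections.
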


As a method to explain the value of depth in deep learning, \cite{yarotsky} constructed a feed forward networks with is near optimal order number of parameters as a function of approximation accuracy $\epsilon$.  In particular, 
\begin{theorem}[Near optimal non-linear approximation with ReLU-networks, \cite{yarotsky}]\label{thm:yarotsky}
For function $f(x)$ with $||f||_{W^{n,\infty}([0,1]^d)}\le 1$, there exists $M_{\mathcal C_Y,\ReLU}(a)$ formed as a feed-forward network with at most $\mathcal C_Y=C_2(d,n)\epsilon^{-d/n}(1+\log(1/\epsilon))$ elements $a\in\RR^{\mathcal C_Y}$ for which 
\[
\min_{a\in\RR^{\mathcal C_Y}}\max_{x\in [0,1]^d} |f(x)-M_{\mathcal{C}_Y,ReLU}(a)(x)|\le \epsilon
\]
where $C_2(d,n)$ may depend on $d$ and $n$, but not on $\epsilon$.
\end{theorem}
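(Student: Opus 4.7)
The plan is to follow Yarotsky's construction, which combines a partition of unity with local Taylor approximation, and then implement both components using only ReLU activations. First, I would grid $[0,1]^d$ at scale $h \sim \epsilon^{1/n}$, obtaining $N \sim h^{-d} \sim \epsilon^{-d/n}$ cells. To each grid point $\alpha$ I associate a bump function $\phi_\alpha$ built as a tensor product of one-dimensional tent functions; the tent functions are exactly representable by $\ReLU$ with $O(1)$ parameters, and the family $\{\phi_\alpha\}$ forms a partition of unity with each $\phi_\alpha$ supported in a cube of side $O(h)$. On each such cube, Taylor's theorem combined with the Sobolev bound $\|f\|_{W^{n,\infty}} \le 1$ gives a polynomial $P_\alpha$ of degree $n-1$ with $\sup|f - P_\alpha| \lesssim h^n \sim \epsilon$. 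Hence $f(x) \approx \sum_\alpha \phi_\alpha(x)\, P_\alpha(x)$ to accuracy $O(\epsilon)$.

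Next, I would implement $\sum_\alpha \phi_\alpha P_\alpha$ by a $\ReLU$ network. The obstruction here is that $\phi_\alpha P_\alpha$ is a polynomial of degree up to $n+d-1$, while $\ReLU$ networks represent only piecewise affine functions. Yarotsky's key trick is to approximate the scalar squaring $x \mapsto x^2$ on $[0,1]$ via the sawtooth iteration: iterating the tent map $k$ times (which a $\ReLU$ network realises with $O(k)$ parameters) produces an approximation to $x^2$ with uniform error at most $2^{-2k-2}$. Choosing $k \asymp \log(1/\epsilon)$ gives an $\epsilon$-accurate squaring gadget, and the polarisation identity $xy = \tfrac12\bigl((x+y)^2 - x^2 - y^2\bigr)$ upgrades this to an approximate multiplication gadget. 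All the monomials appearing in $\phi_\alpha P_\alpha$ are then assembled via a balanced binary tree of these approximate products, at a cost of $O(\log(1/\epsilon))$ parameters per local patch.

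Finally, I would glue: the full network computes $\sum_\alpha \phi_\alpha(x) P_\alpha(x)$ by running the $N = O(\epsilon^{-d/n})$ patch subnetworks in parallel and summing their outputs linearly, which costs another $O(N)$ parameters. A careful propagation-of-error argument, using that $\|\phi_\alpha\|_\infty \le 1$ and $\|P_\alpha\|_\infty \le C(n,d)$ uniformly, shows that errors from the approximate multiplications compound only by a constant factor through each binary tree and remain $O(\epsilon)$ after summation over $\alpha$. The total parameter count is therefore $C_2(d,n)\,\epsilon^{-d/n}(1+\log(1/\epsilon))$, as claimed.

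The main obstacle is the $\ReLU$ approximation of multiplication: it is simultaneously the source of the suboptimal logarithmic factor and the most delicate part of the error analysis, since approximation errors in the squaring gadget interact multiplicatively through the product tree and additively across $\epsilon^{-d/n}$ patches, so the depth $k$ must be tuned so that both effects stay within the $\epsilon$ budget. Removing this $\log(1/\epsilon)$ overhead is precisely the motivation for the bi-activation networks of the present paper, in which $x^2$ is available natively and multiplication is realised exactly.
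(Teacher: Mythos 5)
Your proposal is correct and follows essentially the same route the paper attributes to Yarotsky and recounts in Section 2: the partition of unity into $O(\epsilon^{-d/n})$ localized tent-function patches with local Taylor polynomials (Lemma \ref{approximationlemma}), the sawtooth approximation of $x\mapsto x^2$ of depth $O(\log(1/\epsilon))$ combined with the polarisation identity \eqref{interpolation} to build approximate products, and parallel summation of the patch subnetworks. The identification of the approximate-multiplication gadget as the source of the extra $\log(1/\epsilon)$ factor matches the paper's motivation for replacing it with an exact $x^2$ activation.
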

The feed-forward network $M_{\mathcal C_Y,ReLU}$ constructed in \cite{yarotsky} has hidden layers $h_{i+1}=\ReLU(W_i h_i+b_i)$ for $i=0,\ldots L$ with input $h_0:=x$, $W_i$ being matrices of width bounded independent of $\epsilon$, and depth $L\le c(d,n)(\log(1/\epsilon)+1)$.  The feed-forward network is constructed analogously to the proof in \cite{devore} where there the input $x\in\RR^d$ is partitioned into exponentially many localized portions, each of which then has a local polynomial constructed to approximate $f(\cdot)$.  The ReLU nonlinear activation allows for partitions of the input space $[0,1]^d$ and the logarithmic depth is needed to construct high-degree local polynomial approximations using the saw-tooth functions developed by Telgarsky \cite{telgarsky}; for details, see \cite{yarotsky}.

Our main contribution here is a feed-forward network $M_{\mathcal C_F,bi-\sigma}$ where the layers $h_{i+1}=\sigma_i(W_i h_i+b_i)$ have non-linear activations $\sigma_i(x)$ which are either $\ReLU(x)$ or $x^2$ depending on the layer.  This choice of nonlinear activations is made to simplify the proof in \cite{yarotsky} by retaining the ability to localize $\RR^d$ while more efficiently computing higher-order polynomial functions with bounded depth $L$. Other choices of localizing and approximation activations are possible, see the details of the proof of Theorem 1.3.
\begin{theorem}[Optimal approximation order bi-activation networks]\label{thm:main}
For function $f(x)$ with $||f||_{W^{n,\infty}([0,1]^d)}\le 1$, there exists $M_{\mathcal C_F,bi-\sigma}(a)$ formed as a feed-forward network with $\mathcal C_F=C_3(d,n)\epsilon^{-d/n}$ elements $a\in\RR^{\mathcal{C}_F}$ for which 
\[
\min_{a\in\RR^{\mathcal C_F}}\max_{x\in [0,1]^d} |f(x)-M_{\mathcal C_F,bi-\sigma}(a)(x)|\le \epsilon
\]
where $C_3(d,n)$ may depend on $d$ and $n$, but not on $\epsilon$.
\end{theorem}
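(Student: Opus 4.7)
The plan is to adapt Yarotsky's ReLU construction, replacing the sawtooth-based approximate multiplication (which is the source of his $\log(1/\epsilon)$ factor) by an \emph{exact} multiplication gadget using the polarization identity $xy=\tfrac14\bigl[(x+y)^2-(x-y)^2\bigr]$, which is implementable in constant depth by the $x^2$ activation. Everything else in Yarotsky's scheme has parameter cost already proportional to $\epsilon^{-d/n}$, so removing this logarithmic factor yields the optimal rate.

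First I would partition $[0,1]^d$ by a uniform grid of side length $1/N$ with $N=\lceil c_{n,d}\,\epsilon^{-1/n}\rceil$, and take a partition of unity $\{\phi_m\}$ subordinate to this grid built as tensor products of one-dimensional tent functions. Each tent is a bounded linear combination of ReLU units (computed in one ReLU layer); the tensor product forming $\phi_m$ is then obtained via iterated polarization with $x^2$ activations in constant depth $O(\log d)$. On the support of $\phi_m$ (the cube centred at $x_m$), I would approximate $f$ by its degree-$(n-1)$ Taylor polynomial $P_m$ at $x_m$; the Sobolev bound $\|f\|_{W^{n,\infty}}\le 1$ gives $|f(x)-P_m(x)|\le C(n,d)\,N^{-n}$ on $\mathrm{supp}(\phi_m)$, which combined with $\sum_m \phi_m\equiv 1$ and the choice of $N$ yields
\[
\Bigl|f(x)-\sum_m \phi_m(x)\,P_m(x)\Bigr|\le C(n,d)\,N^{-n}\le \epsilon.
\]

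Next I would realise the target $\sum_m \phi_m(x)P_m(x)$ as a bi-activation network. Each monomial $(x-x_m)^\alpha$ with $|\alpha|\le n-1$ is produced exactly, in depth $O(\log n)$, by repeated polarization on the affine inputs $x_i-(x_m)_i$; the products $\phi_m(x)\,P_m(x)$ are a final polarization step. Crucially, each point $x$ lies in the support of at most $2^d$ of the $\phi_m$, so no cancellation issues arise, and the $N^d$ local sub-networks can be placed in parallel with a single affine sum at the output. The Taylor coefficients enter linearly and continuously, satisfying DeVore's continuous-selection requirement of Theorem~\ref{thm:devore}.

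Counting parameters, there are $N^d=O(\epsilon^{-d/n})$ cubes, and each contributes only a constant number (depending on $n$ and $d$ but not $\epsilon$) of weights for its tent-product factor, its $\binom{n-1+d}{d}$ Taylor coefficients, and the fixed polarization gadgets for the monomials and the final $\phi_m P_m$ product. Thus the total parameter budget is $C_3(n,d)\,\epsilon^{-d/n}$, with depth bounded independently of $\epsilon$. The main obstacle is the bookkeeping required to certify that every multiplication in the construction (tent tensor products, monomial powers, and the $\phi_m P_m$ step) is performed by the exact polarization gadget rather than by an $\epsilon$-dependent approximation; any slip that forces even one multiplication to be approximated by ReLU sawteeth would reintroduce the $\log(1/\epsilon)$ factor and forfeit the optimal rate.
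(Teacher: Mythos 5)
Your proposal is correct and follows essentially the same route as the paper: localize via a tensor-product partition of unity subordinate to a grid of size $N=O(\epsilon^{-1/n})$, approximate locally by degree-$(n-1)$ Taylor polynomials, and compute each term $\phi_{\mathbf m}(x)(x-\mathbf m/N)^{\mathbf n}$ \emph{exactly} by tournament-style pairwise multiplication using a constant-size $x^2$-activation polarization gadget, so that the $\log(1/\epsilon)$ depth of Yarotsky's approximate squaring disappears and the count is $O(\epsilon^{-d/n})$. The only cosmetic difference is the polarization identity used ($xy=\tfrac14[(x+y)^2-(x-y)^2]$ versus the paper's $xy=\tfrac12[(x+y)^2-x^2-y^2]$), which is immaterial.
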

The Proof of Theorem \ref{thm:main} is given in Section \ref{sec:proof}, making use of a key lemma from the proof of Theorem \ref{thm:yarotsky} by Yarotsky.

We further extend Theorem \ref{thm:main} in two separate directions, by considering $f(x)$ to be analytic or $f(x)$ to be contained on the union of $d_\text{eff}<d$ dimensional canonical subspaces of $\RR^d$.

\begin{theorem}[Optimal approximation order bi-activation networks: Analytic functions]\label{thm:analytic}
Let $f(x)$ be an analytic function on $[0,1]^d$, characterised \cite{ahlfors} by 
\begin{equation}\label{eq:analfunc}
\sup_{x \in [0,1]^d} \left | \frac{\partial^{\mathbf n} f}{\partial x^{\mathbf n}}(x) \right | \leq C_f^{|{\mathbf n}|+1}{\mathbf n}! \quad\mbox{for}\;\;\mbox{all}\; n\end{equation}
where $C_f$ depends on the particular choice of $f(x)$.
Then for any $d$, and $\epsilon \in (0,1)$, there exists $M_{\mathcal C_A,bi-\sigma}(a)$ formed as a feed-forward network with $\mathcal C_A=C_4(d,C_f)\left((2\epsilon)^{\log^{-\frac{1}{2}}\left(\tfrac{2^d}{\epsilon}\right)}\log^{\frac{d}{2}}\left(\frac{1}{\epsilon}\right)\right)$ elements $a\in\RR^{\mathcal C_A}$ for which 
\[
\min_{a\in\RR^{\mathcal{C_A}}}\max_{x\in [0,1]^d} |f(x)-M_{\mathcal C_A,bi-\sigma}(a)(x)|\le \epsilon
\]
where $C_4(d,C_f)$ does not depend on $\epsilon$.
\end{theorem}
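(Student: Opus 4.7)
I would follow the same high-level architecture as in the proof of Theorem~\ref{thm:main}, partitioning $[0,1]^d$ into localized subcubes, using a ReLU partition of unity, and approximating $f$ on each subcube by a polynomial realized via the $x^2$ activation. The new ingredient is that the analyticity bound \eqref{eq:analfunc} allows me to replace the fixed-degree polynomial adequate for Sobolev functions by a Taylor polynomial whose degree $N$ is allowed to grow with $\epsilon$, and then to exploit the geometric decay of its remainder to balance $N$ against the number of cubes.

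\textbf{Step 1: partition and local Taylor polynomial.} Fix parameters $\delta>0$ (cube side) and $N\in\mathbb{N}$ (Taylor degree), to be chosen at the end. Partition $[0,1]^d$ into cubes $\{Q_\alpha\}$ of side $\delta$, and on each $Q_\alpha$ with center $y_\alpha$ form
\[
T^{(\alpha)}_N(x) \;=\; \sum_{|\mathbf n|\le N}\frac{\partial^{\mathbf n}f(y_\alpha)}{\mathbf n!}\,(x-y_\alpha)^{\mathbf n}.
\]
Using \eqref{eq:analfunc} and $\|x-y_\alpha\|_\infty\le\delta/2$, the remainder satisfies
\[
|f(x)-T^{(\alpha)}_N(x)| \;\le\; C_f\sum_{k>N}\binom{k+d-1}{d-1}(C_f\delta)^k,
\]
so provided $C_f\delta$ is bounded away from $1$, this is geometrically dominated by $(C_f\delta)^{N+1}$ up to a polynomial factor in $N$ and $d$.

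\textbf{Step 2: network assembly.} I would import the construction from the proof of Theorem~\ref{thm:main}: the partition of unity $\{\phi_\alpha\}$ subordinate to the cubes $\{Q_\alpha\}$ is built from ReLU units with $O(1)$ parameters per cube, and each monomial $(x-y_\alpha)^{\mathbf n}$ is computed by the $x^2$-activation gadget in constant depth. Summing the localized products $\phi_\alpha T^{(\alpha)}_N$ produces a bi-activation feed-forward network of constant depth whose total parameter count is of order $(1/\delta)^d\binom{N+d}{d}$.

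\textbf{Step 3: choice of $(\delta,N)$ and main obstacle.} The final and delicate step is to minimize $(1/\delta)^d\binom{N+d}{d}$ subject to the constraint $N\log\!\bigl(1/(C_f\delta)\bigr)\gtrsim\log(1/\epsilon)$ dictated by Step~1. Choosing $N\sim\sqrt{\log(2^d/\epsilon)}$ causes the combinatorial factor $\binom{N+d}{d}\sim N^d/d!$ to contribute the $\log^{d/2}(1/\epsilon)$ term in the statement, while pinning $\delta$ so that the tail bound is exactly of order $\epsilon$ produces the $(2\epsilon)^{\log^{-1/2}(2^d/\epsilon)}$ prefactor, with $C_f$-dependent constants absorbed into $C_4(d,C_f)$. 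The main obstacle is precisely this joint balancing: one must track the dependence of $\delta$ and $N$ on $\epsilon$ carefully enough to recover the exact closed form in the statement, while keeping $C_f\delta$ bounded strictly below $1$ uniformly in $\epsilon\in(0,1)$ so that the geometric-tail argument of Step~1 remains valid. Once this balance is fixed, the rest of the proof is just the ReLU partition-of-unity construction together with the $x^2$-activation polynomial gadget already established for Theorem~\ref{thm:main}.
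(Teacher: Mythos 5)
Your proposal follows essentially the same route as the paper: reuse the Lemma~\ref{approximationlemma} decomposition and the exact bi-activation representation of localized polynomials from Theorem~\ref{thm:main}, use the analyticity bound \eqref{eq:analfunc} to control the Taylor remainder for a degree that grows as $\epsilon\to 0$, and then jointly optimize the Taylor degree against the grid spacing. The only part you leave unfinished is the explicit minimization in Step~3, which the paper carries out by substituting the choice \eqref{eq:N2} into the parameter count $C_4 d^n(N+1)^d$ and minimizing over $n$, obtaining $n_{\min}=\sqrt{d\log(2^d/\epsilon)/\log d}$ --- note this carries a factor $\sqrt{d/\log d}$ absent from your choice $N\sim\sqrt{\log(2^d/\epsilon)}$, and that the sub-unity prefactor $(2\epsilon)^{\log^{-1/2}(2^d/\epsilon)}$ cannot literally arise from the cube count $(1/\delta)^d\ge 1$ as you suggest, so the bookkeeping in that final step needs more care than your sketch indicates.
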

Theorem \ref{thm:analytic} differs from Theorem \ref{thm:main} primarily in the lack of dependence on smoothness $n$ as the number of parameters $\mathcal{C_A}$ needed in the network has been minimized over all admissible $n$.  The consequence of choosing the optimal smoothness $n$ is that the $\epsilon$ and $d$ dependence of the number of parameters $\mathcal{C_A}$ decreases from $(\epsilon^{-1/n})^d$ to predominantly $\log(1/\epsilon)^{d/2}$.

Next, for $d_\text{eff} < d$ we define the canonical subspace of $[0,1]^d$ of dimension $d_\text{eff}$; that is  $$x\in\chi^d_{d_\text{eff},{e}}:=\{x\in[0,1]^d: \mbox{ with if } i \notin {e}, x_i = 0\}.$$ Where $e$ is a subset of $\{1,\dots,d\}$, with $d_\text{eff}$ elements. $I^d_{d_\text{eff}}$ is the collections of all $e$. Then if $f(x)$ is nonzero on only one known subspace $\chi^d_{d_\text{eff},e}$ Lemma \ref{lem:ssubspaces} holds.  In the case that $f(x)$ is nonzero on the union of all $d \choose d_\text{eff}$ such subspaces $$\bar{\chi}^d_{d_\text{eff}} := \bigcup_{e\in I}\chi^d_{d_\text{eff},{e}},$$ the number of parameters $\mathcal C_M$ needed to compute an $\epsilon$ approximation of $f(x)$ over one or all canonical subspaces is given by $\mathcal C_M=C_5(d, d_\text{eff},n)\epsilon^{-d_\text{eff}/n})$ (see Lemma \ref{lem:ssubspaces} and Theorem \ref{thm:subspaces}).

\begin{theorem}[Optimal approximation order bi-activation networks: low-dimensional subspaces]\label{thm:subspaces}
For function $f(x)$ with $||f||_{W^{n,\infty}([0,1]^d)}\le 1$ where $x$ is restricted to $\bar{\chi}^d_{d_\text{eff}}$, there exists $M_{\mathcal C_M,bi-\sigma}(a)$ formed as a feed-forward network with $\mathcal C_M=C_5(d,n)\epsilon^{-d_\text{eff}/n}$ elements $a\in\RR^{\mathcal C_M}$ for which the error restricted on $\chi_{d_\text{eff}}^d$ is
\[
\min_{a\in\RR^{\mathcal C_M}}\max_{x\in \bar{\chi}_{d_\text{eff}}^d} |f(x)-M_{\mathcal C_M,bi-\sigma}(a)(x)|\le \epsilon
\]
where $C_5(d,n)$ may depend on $d$ and $n$, but not on $\epsilon$.
\end{theorem}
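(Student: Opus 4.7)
The plan is to stitch together local approximations on each of the $\binom{d}{d_\text{eff}}$ canonical $d_\text{eff}$-dimensional subspaces using a bi-activation soft selector, so that only the correct local approximator contributes to the output on each piece of $\bar\chi^d_{d_\text{eff}}$.

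First I will invoke Lemma~\ref{lem:ssubspaces} once per index set $e\in I^d_{d_\text{eff}}$ to obtain bi-activation networks $N_e(x)=\tilde N_e(\pi_e(x))$, each of size $C(d,d_\text{eff},n)\epsilon^{-d_\text{eff}/n}$, satisfying $|f(x)-N_e(x)|\le\epsilon/2$ for $x\in\chi^d_{d_\text{eff},e}$; here $\pi_e$ is the projection onto the coordinates in $e$. Since $|I^d_{d_\text{eff}}|=\binom{d}{d_\text{eff}}$ is independent of $\epsilon$, placing all these sub-networks in parallel keeps the local cost at $O(\epsilon^{-d_\text{eff}/n})$. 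I will also clip each $N_e$ through a ReLU sandwich so that $|N_e(x)|\le 2$ on all of $[0,1]^d$.

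Next, for each $e$ I build a soft indicator $\phi_e(x):=\prod_{i\notin e}\max(0,1-M x_i)$ of the subspace $\chi^d_{d_\text{eff},e}$, using ReLU for the spike $\max(0,1-Mt)$ and the polarisation identity $ab=\tfrac12\bigl((a+b)^2-a^2-b^2\bigr)$ for the products, so that $\phi_e$ uses only $O(d)$ bi-activation parameters independent of $\epsilon$. By construction $\phi_e(x)=1$ on $\chi^d_{d_\text{eff},e}$, while $\phi_e(x)>0$ forces $x_i<1/M$ for every $i\notin e$. The output will be the convex combination
\[
M_{\mathcal C_M,bi-\sigma}(x)=\frac{\sum_{e\in I^d_{d_\text{eff}}}\phi_e(x)\,N_e(x)}{\sum_{e\in I^d_{d_\text{eff}}}\phi_e(x)}.
\]
On $\bar\chi^d_{d_\text{eff}}$ the denominator is at least $1$ because any $x$ lies in some $\chi^d_{d_\text{eff},e}$ where $\phi_e(x)=1$. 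For every $e$ with $\phi_e(x)>0$ I split $|f(x)-N_e(x)|\le|f(x)-f(\pi_e(x))|+|f(\pi_e(x))-N_e(x)|$: the first term is $\le\|\nabla f\|_\infty(d-d_\text{eff})/M\le(d-d_\text{eff})/M$ via $\|f\|_{W^{n,\infty}}\le 1$, and the second is $\le\epsilon/2$; choosing $M=2(d-d_\text{eff})/\epsilon$ makes the convex combination $\epsilon$-close to $f$.

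The main obstacle is implementing the reciprocal $1/\sum_e\phi_e(x)$ within the bi-activation family, since neither ReLU nor $x^2$ directly produces division. I will exploit that the denominator is confined to the fixed interval $[1,\binom{d}{d_\text{eff}}]$, on which $y\mapsto 1/y$ is analytic and bounded: by Theorem~\ref{thm:analytic}, or more explicitly by a few Newton iterations $t\mapsto t(2-yt)$ (each a constant-degree polynomial realised via $x^2$), a bi-activation subnetwork of only $O(\log\log(1/\epsilon))$ parameters approximates the reciprocal to the required accuracy. Since the numerator is bounded by $2\binom{d}{d_\text{eff}}$, multiplying by this reciprocal adds only $O(\epsilon)$ error, which is absorbed into the constant. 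Summing the local approximators, the soft indicators, and the reciprocal block yields the promised $C_5(d,n)\epsilon^{-d_\text{eff}/n}$ parameter count.
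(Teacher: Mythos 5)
Your proposal is correct in its essentials, but it takes a genuinely different route from the paper. The paper's proof defines a piecewise object $\hat f$ that agrees with the single-subspace approximant on each $\chi^d_{d_\text{eff},e}$, controls $\sup_{\bar\chi^d_{d_\text{eff}}}|\hat f - f|$ by a union bound over the $\binom{d}{d_\text{eff}}$ subspaces, and compensates for that factor by recalibrating the mesh parameter $N$; it then simply asserts that the per-subspace networks combine into one feed-forward network of total size $\binom{d}{d_\text{eff}}d_\text{eff}^n(N+1)^{d_\text{eff}}$, leaving implicit the mechanism by which the correct subnetwork is selected on each subspace (and on their nontrivial intersections). You instead construct that selection mechanism explicitly: ReLU-based soft indicators $\phi_e$ supported in an $O(1/M)$-neighbourhood of each subspace, a normalized convex combination, and a Newton-iteration reciprocal block of $O(\log\log(1/\epsilon))$ parameters realised with the $x^2$ activation. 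This buys a concrete, single network that is provably accurate on all of $\bar\chi^d_{d_\text{eff}}$ including overlaps, at the cost of extra machinery: you need the Lipschitz estimate $|f(x)-f(\pi_e(x))|\le (d-d_\text{eff})/M$ (valid since $n\ge 1$ bounds first-order partials), a lower bound on the denominator (which holds, being at least $1$ on the union), and a final accounting in which the error is $\epsilon + O(\epsilon)$ rather than exactly $\epsilon$ — the last point is harmless since rerunning the construction at accuracy $\epsilon/C$ only changes $C_5$, but you should state that rescaling explicitly. Both arguments land on the same parameter count $C_5(d,d_\text{eff},n)\epsilon^{-d_\text{eff}/n}$, since your indicator and reciprocal blocks contribute only $O(d\binom{d}{d_\text{eff}})$ and $O(\log\log(1/\epsilon))$ parameters respectively, which are dominated by the $\binom{d}{d_\text{eff}}$ parallel copies of the Lemma~\ref{lem:ssubspaces} networks.
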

This restricted subspace model is motivated by natural image inputs with prescribed compression on a known orthogonal basis, such as JPEG compression.  This union of subspace model $\bar{\chi}_{d_eff}^d$ is also widely used in the theory of compressed sensing, see \cite{foucart13} and references therein, and has also been used to increase robustness against adversarial attacks on image classification by \cite{guo2018countering}.

\section{Approximation power of bi-activation networks}
The proof of Theorem \ref{thm:main} being adapted from that \ref{thm:yarotsky} in \cite{yarotsky}, an understanding of the former is essential in order to explain the latter. 

As mentioned previously, \cite{yarotsky} first partitions the input $x\in [0,1]^d$ into exponentially many localized portions using a partition of unity $\{\phi_{\textbf{m}}\}$, where each $\phi_{\textbf{m}}$ is piecewise linear and expressible by a ReLU network with a constant number of parameters (see Proposition 1 in \cite{yarotsky}). The aim is then to approximate the function $f$ by Taylor polynomials locally, giving the following representation for an approximation of $f$.

\begin{lemma}[\cite{yarotsky}] \label{approximationlemma}
  Let $\epsilon>0$ be arbitrary and $f \in W^{n,\infty}([0,1]^d)$. Then there exists a function $\Tilde{f}$ expressible as
    \[
        \Tilde{f}(x) = \sum_{{\textbf{m}}\in \{0,...,N\}^d} \sum_{{\textbf{n}}: |{\textbf{n}}| < n} a_{{\textbf{m}},\textbf{n}}\phi_{\textbf{m}}(x)\left(x-\frac{{\textbf{m}}}{N}\right)^{\textbf{n}},
    \]
    where $a_{m,n} \in \mathbb{R},\, |a_{m,n}|\leq 1$, $\{\phi_{\textbf{m}}\}_{\textbf{m}\in\{0,1,...,N\}^d}$ is a partition of unity such that each $\phi_\textbf{m}$ is given by a product of $d$ piecewise linear univariate factors. Furthermore, $\Tilde{f}$ is such that
\begin{align}
	|f(x)-\Tilde{f}(x)|
	&\leq \frac{2^dd^n}{n!}\left(\frac{1}{N}\right)^n  \max_{{\textbf{n}}:|{\textbf{n}}|=n}\text{ess sup}_{x\in [0,1]^d} |D^{\textbf{n}}f(x)|. \label{eq:bound}
\end{align}
\end{lemma}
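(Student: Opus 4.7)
The plan is to follow the classical paradigm of \emph{partition-of-unity plus local polynomial interpolation}. First I would fix a continuous piecewise-linear hat $\psi:\R\to[0,1]$ of support $[-1/N,1/N]$, peak $\psi(0)=1$, whose integer translates $\psi(\cdot - j/N)$ form a partition of unity on $[0,1]$ (with one-sided truncation at the two endpoints). Setting $\phi_{\textbf{m}}(x)=\prod_{i=1}^d \psi(x_i - m_i/N)$ one immediately obtains the desired structural properties: each $\phi_{\textbf{m}}$ is a product of $d$ piecewise-linear univariate factors, $\{\phi_{\textbf{m}}\}_{\textbf{m}\in\{0,\dots,N\}^d}$ is a partition of unity on $[0,1]^d$, $\mathrm{supp}\,\phi_{\textbf{m}}$ is a cube of side $2/N$ around $\textbf{m}/N$, and at most $2^d$ of the $\phi_{\textbf{m}}$ are simultaneously nonzero at any $x\in[0,1]^d$.

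Second, to each grid point $\textbf{m}/N$ I would attach the order-$(n-1)$ Taylor polynomial of $f$,
\[
P_{\textbf{m}}(x)\;=\;\sum_{\textbf{n}:\,|\textbf{n}|<n} \frac{D^{\textbf{n}}f(\textbf{m}/N)}{\textbf{n}!}\,\bigl(x-\textbf{m}/N\bigr)^{\textbf{n}},
\]
and define $\Tilde{f}(x) = \sum_{\textbf{m}} \phi_{\textbf{m}}(x)\,P_{\textbf{m}}(x)$. This matches the form requested with $a_{\textbf{m},\textbf{n}} = D^{\textbf{n}}f(\textbf{m}/N)/\textbf{n}!$, and since $\|f\|_{W^{n,\infty}}\le 1$ forces $|D^{\textbf{n}}f|\le 1$ essentially for every $|\textbf{n}|\le n$, the coefficients satisfy $|a_{\textbf{m},\textbf{n}}|\le 1/\textbf{n}! \le 1$.

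The error estimate then follows from Taylor's theorem with integral remainder. Using the partition-of-unity identity
\[
f(x)-\Tilde{f}(x)\;=\;\sum_{\textbf{m}}\phi_{\textbf{m}}(x)\bigl(f(x)-P_{\textbf{m}}(x)\bigr),
\]
restricting the sum to $\textbf{m}$ for which $x\in\mathrm{supp}\,\phi_{\textbf{m}}$ (at most $2^d$ such indices, each with $\|x-\textbf{m}/N\|_\infty\le 1/N$), and bounding the standard Taylor remainder by $\tfrac{d^{n}}{n!}(1/N)^{n} \max_{|\textbf{n}|=n}\mathrm{esssup}_{y\in[0,1]^d}|D^{\textbf{n}}f(y)|$, gives the claimed inequality \eqref{eq:bound} after using $\sum_{\textbf{m}}\phi_{\textbf{m}}(x)=1$ and counting $2^d$ active overlaps.

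The principal technical obstacle is that $W^{n,\infty}$-functions are only defined almost everywhere, so the pointwise values $D^{\textbf{n}}f(\textbf{m}/N)$ used in $P_{\textbf{m}}$ are not a priori meaningful. I would resolve this by first mollifying, proving the bound for smooth approximants $f_\delta\to f$ with $\|f_\delta\|_{W^{n,\infty}}\le 1$, and passing to the limit on the compact domain; an equivalent route is to invoke the integrated Taylor formula, which only requires values of the weak derivatives on a set of full measure and which can be arranged to include the grid $\{\textbf{m}/N\}$. Beyond this, everything is bookkeeping: $2^d$ from overlap cardinality, $d^{n}/n!$ from the number and factorial weights of order-$n$ multi-indices, and $N^{-n}$ from the diameter of $\mathrm{supp}\,\phi_{\textbf{m}}$.
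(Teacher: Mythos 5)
Your proposal is correct and follows essentially the same route as the paper's appendix proof: a product-of-univariate-piecewise-linear partition of unity on the grid $\{\textbf{m}/N\}$, local degree-$(n-1)$ Taylor polynomials, the $2^d$ overlap count, and the $\tfrac{d^n}{n!}N^{-n}$ Taylor remainder bound. The only differences are cosmetic (you use a tent bump where the paper uses a trapezoidal $\psi(3N(\cdot))$, which changes nothing) plus your added mollification remark about pointwise values of weak derivatives, a point the paper passes over silently.
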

\textit{The proof of this lemma is included in the appendix for completeness.}

Showing that ReLU networks can approximate monomials (and, in turn, polynomials) would then complete the proof. Indeed, in Section 3.1 of \cite{yarotsky}, the author does so by first showing that $f(x)=x^2$ can be approximated by a ReLU network of complexity $O(\ln(1/\epsilon))$. Using the following identity to recover multiplication from squaring:
\begin{equation}\label{interpolation}
    xy = \frac{1}{2}\big((x+y)^2-x^2-y^2\big)
\end{equation}
the author then shows how a ReLU network of complexity $O(\ln(1/\epsilon))$ can in fact approximate terms of the form $\phi_{\textbf{m}}(x)\left(x-\frac{{\textbf{m}}}{N}\right)^{\textbf{n}}$.

Lastly, note that in lemma \ref{approximationlemma}, $\tilde{f}$ is a linear combination of at most $d^n(N+1)^d$ such terms. $N$ is a smoothness parameter that can be chosen so that the upper bound in \eqref{eq:bound} becomes $|f(x)-\Tilde{f}(x)|<\epsilon$. In Yarotsky's case, this corresponds to choosing
\begin{equation}\label{N}
	N = N(\epsilon, d, n) =\left\lceil\left(\frac{n!}{2^d d^n}\epsilon\right)^{-1/n}\right\rceil,
\end{equation}
which also yields 
\[
d^n(N+1)^d = d^n\left(\frac{n!}{2^dd^n}\epsilon\right)^{-d/n}=O(\epsilon^{-d/n}),
\]
and the final ReLU network used approximate $f$ therefore consists of $\mathcal{C}_Y=O(\epsilon^{-d/n}\ln(1/\epsilon))$ parameters due to the $\log(1/\epsilon)$ depth needed to approximate $x^2$ within $\epsilon$ using a ReLU network.


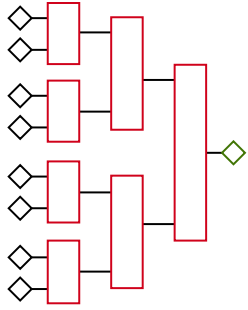
\begin{figure}
    \centering

\tikzset{every picture/.style={line width=0.75pt}} 

\tikzset{every picture/.style={line width=0.75pt}} 

\begin{tikzpicture}[x=0.75pt,y=0.75pt,yscale=-1,xscale=1]

\draw    (150.86,136.63) -- (158.86,136.63) ;
\draw    (150.86,152.63) -- (158.86,152.63) ;
\draw    (150.86,175.81) -- (158.86,175.81) ;
\draw    (150.86,191.8) -- (158.86,191.8) ;
\draw    (150.86,216.59) -- (158.86,216.59) ;
\draw    (150.86,232.58) -- (158.86,232.58) ;
\draw    (150.86,257.37) -- (158.86,257.37) ;
\draw    (150.86,273.36) -- (158.86,273.36) ;
\draw    (174.86,143.83) -- (190.86,143.83) ;
\draw    (174.86,183.81) -- (190.86,183.81) ;
\draw    (174.86,224.59) -- (190.86,224.59) ;
\draw    (174.86,264.56) -- (190.86,264.56) ;
\draw    (206.86,167.82) -- (222.86,167.82) ;
\draw    (206.86,240.58) -- (222.86,240.58) ;
\draw    (238.86,204.6) -- (246.86,204.6) ;
\draw  [color={rgb, 255:red, 208; green, 2; blue, 27 }  ,draw opacity=1 ] (158.97,128.98) -- (174.86,128.98) -- (174.86,159.82) -- (158.97,159.82) -- cycle ;
\draw  [color={rgb, 255:red, 208; green, 2; blue, 27 }  ,draw opacity=1 ] (158.97,168.16) -- (174.86,168.16) -- (174.86,199) -- (158.97,199) -- cycle ;
\draw  [color={rgb, 255:red, 208; green, 2; blue, 27 }  ,draw opacity=1 ] (158.97,208.94) -- (174.86,208.94) -- (174.86,239.78) -- (158.97,239.78) -- cycle ;
\draw  [color={rgb, 255:red, 208; green, 2; blue, 27 }  ,draw opacity=1 ] (158.97,248.91) -- (174.86,248.91) -- (174.86,280.55) -- (158.97,280.55) -- cycle ;
\draw  [color={rgb, 255:red, 208; green, 2; blue, 27 }  ,draw opacity=1 ] (190.97,136.18) -- (206.86,136.18) -- (206.86,192.95) -- (190.97,192.95) -- cycle ;
\draw  [color={rgb, 255:red, 208; green, 2; blue, 27 }  ,draw opacity=1 ] (190.97,216.13) -- (206.86,216.13) -- (206.86,272.9) -- (190.97,272.9) -- cycle ;
\draw  [color={rgb, 255:red, 208; green, 2; blue, 27 }  ,draw opacity=1 ] (222.97,160.16) -- (238.86,160.16) -- (238.86,248.91) -- (222.97,248.91) -- cycle ;

\draw   (145.07,130.84) -- (150.86,136.63) -- (145.07,142.43) -- (139.28,136.63) -- cycle ;
\draw   (145.07,146.84) -- (150.86,152.63) -- (145.07,158.42) -- (139.28,152.63) -- cycle ;
\draw   (145.07,170.02) -- (150.86,175.81) -- (145.07,181.6) -- (139.28,175.81) -- cycle ;
\draw   (145.07,186.01) -- (150.86,191.8) -- (145.07,197.59) -- (139.28,191.8) -- cycle ;
\draw   (145.07,210.8) -- (150.86,216.59) -- (145.07,222.38) -- (139.28,216.59) -- cycle ;
\draw   (145.07,226.79) -- (150.86,232.58) -- (145.07,238.37) -- (139.28,232.58) -- cycle ;
\draw   (145.07,251.58) -- (150.86,257.37) -- (145.07,263.16) -- (139.28,257.37) -- cycle ;
\draw   (145.07,267.57) -- (150.86,273.36) -- (145.07,279.15) -- (139.28,273.36) -- cycle ;
\draw  [color={rgb, 255:red, 65; green, 117; blue, 5 }  ,draw opacity=1 ] (252.65,198.81) -- (258.44,204.6) -- (252.65,210.39) -- (246.86,204.6) -- cycle ;

\end{tikzpicture}
    \caption{Multiplication of $k$ elements (depicted in black) using $O(k)$ subnetworks (depicted in red) each of constant size (independent of $d$ and $n$), giving a network of depth $O(\ln_2(k))$. Here, $k=8$.}
    \label{fig:1}
\end{figure}

\subsection{Proof of Theorem \ref{thm:main}, Optimal approximation order bi-activation networks}\label{sec:proof}

\begin{proof}[Proof of Theorem \ref{thm:main}]
Let $\Tilde{f}$ be the approximation to $f$ given by Lemma \ref{approximationlemma}. Since $f$ is in the unit-ball in $W^{n,\infty}$, $\max_{{\textbf{n}}:|{\textbf{n}}|=n}\text{ess sup}_{x\in [0,1]^d} |D^{\textbf{n}}f(x)|\leq 1$. Choosing the same $N$ as in \ref{N}, we find that $||f-\Tilde{f}||_\infty\leq \epsilon$.

In contrast to ReLU networks, we claim that bi-activation networks can represent terms of the form $\phi_{\textbf{m}}(x)(x-{\textbf{m}}/N)^{\textbf{n}}$ \textit{exactly} using a constant number of trainable parameters. Indeed, each of these terms is itself a product of at most $d+n-1$ piecewise linear univariate factors: a product of $d$ functions defining each $\phi_{\textbf{m}}$ and at most $n-1$ functions $x_k-m_k/N$. These products can be implemented by a bi-activation network with a complexity of the order of $(n+d)$ and depth of the order of $\log_2(n+d)$ (in both cases, $O(1)$ with respect to $\epsilon$), by repeatedly pairing up the terms and multiplying them in tournament fashion (see figure \ref{fig:1}). The multiplication of two terms can be achieved by a bi-activation network of constant size using \eqref{interpolation}\footnote{More specifically, we can use a network with activation function $x^2$ which has one hidden layer. The inputs $x$ and $y$ connect fully to the hidden layer with three nodes, and weights $[0,1], [1,0]$ and $[1,1]$. The three nodes are connected to the output with weight $[-1/2,-1/2,1/2]$.}. 

Therefore, $\Tilde{f}$ can be written by a bi-activation network $M_{\mathcal C_F,bi-\sigma}(a)$ with $\mathcal{C}_F=O(d^n(N+1)^d)$ parameters as follows. The network uses parallel subnetworks that each compute a term in the series defining $\Tilde{f}$, and computes the final output by summing the outputs of these subnetworks, weighted with the appropriate $a_{{\textbf{m}},\textbf{n}}$. Since there are not more than $d^n(N+1)^d$ subnetworks, $\mathcal{C}_F=C_3(d,n)d^n(N+1)^d$ weights and computation units, for some constant $C_3(d,n)$. For our choice of $N$ in \eqref{N} to achieve an $\epsilon$ accurate approximation, $\mathcal{C}_F=O(\epsilon^{-d/n})$. 
\end{proof}



\subsection{Proof of Theorem \ref{thm:analytic}, Optimal approximation order bi-activation networks: Analytic functions}

\begin{proof}[Proof of Theorem \ref{thm:analytic}]  Once again, let $\Tilde{f}$ be the approximation to $f$ given by Lemma \ref{approximationlemma}, noting that $f\in W^{n,\infty}([0,1]^d)$ for all $n$ as it is analytic. Then applying the bound on $|f(x)-\Tilde{f}(x)|$ given by the same Lemma and the bound on smoothness for analytic functions \eqref{eq:analfunc}, we find that
\begin{align*}
	|f(x)-\Tilde{f}(x)|
	&\leq \frac{2^dd^n}{n!}\left(\frac{1}{N}\right)^n  \max_{{\textbf{n}}:|{\textbf{n}}|=n}\text{ess sup}_{x\in [0,1]^d} |D^{\textbf{n}}f(x)|\\
	& \leq \frac{2^dd^n}{n!}\left(\frac{1}{N}\right)^n C_f^{n+1}n! \\
        & \leq {2^dd^n}\left(\frac{C_f}{N}\right)^{n+1},
\end{align*}
where $C_f$ is a constant depending on $f$.

Notice that in this case the result holds for all $n$. This means that, when picking $N$, we can optimize over $n$ to minimize the number of trainable parameters needed by our network. To begin with, choosing
\begin{equation}\label{eq:N2}
	N_1 = N(C_f, \epsilon, d, n) =\frac{1}{C}\left\lceil\left(\frac{\epsilon}{2^dd^n}\right)^{-1/{(n+1)}}\right\rceil
\end{equation}
we get that $||f-\Tilde{f}||_\infty\leq \epsilon$. 

Arguing in the exact same manner as in the proof of Theorem \ref{thm:main}, we know that $\Tilde{f}$ can be written as a bi-activation neural network $M_{\mathcal C_A,bi-\sigma}(a)$. The total number of parameters $\mathcal C_A$ then needed by the network to represent $\Tilde{f}$ is equal to
\begin{equation}\label{eq:comp}
\mathcal{C}_A = C_4(C_f,n,d)d^n(N+1)^d
\end{equation}
for some constant $C_4=C_4(C_f,n,d)$ that does not depend on $\epsilon$. Substituting the choice of $N$ in \eqref{eq:N2} in \eqref{eq:comp} and minimizing over $n$, we find that $\mathcal{C_A}$ is minimal for
\begin{equation}\label{eq:n1}
n_\text{min}= \sqrt{\frac{ d\left(d \log (2)+\log \left(\frac{1}{\epsilon} \right)\right)}{\log (d)}}.    
\end{equation}

Substituting \eqref{eq:n1} and \eqref{eq:N2} into \eqref{eq:comp}, gives us
\[\mathcal C_A =
C_4\cdot2^{\frac{d^{3/2} \sqrt{\log (d)}}{\sqrt{d \log (2)+\log \left(\frac{1}{\epsilon
   }\right)}}} \epsilon ^{\frac{\sqrt{d} \sqrt{\log (d)}}{\sqrt{d \log (2)+\log
   \left(\frac{1}{\epsilon }\right)}}} \log ^{\frac{d}{2}}\left(\frac{1}{\epsilon
   }\right),
   \]
which grows as $\epsilon \to 0 $ in the order of $$(2\epsilon)^{\log^{-\frac{1}{2}}\left(\tfrac{2^d}{\epsilon}\right)}\log^{\frac{d}{2}}\left(\frac{1}{\epsilon}\right),$$
concluding the proof.

\end{proof}

\subsection{Proof of Theorem \ref{thm:subspaces}, Optimal approximation order bi-activation networks: low-dimensional subspaces}

For clarity, first consider the simplest case of $x\in \chi_{d_\text{eff},e}^d$, for a known $e \in I$.  Without loss of generality this can be the first $d_\text{eff}$ dimensions of $\RR^d$ being nonzero, that is $f \circ A(x) :=f(Ax)$ for  
$A\in \mathbb R^{d \times d_\text{eff}}$ given by 
\begin{equation}\label{eq:Asimple} A = \begin{pmatrix}
1 & 0 & \dots & 0 & 0 \\
0 & 1 & & 0 & 0 \\
\vdots &  & \ddots &  & \vdots \\
0 & 0 &  & 1 & 0 \\
0 & 0 &  & 0 & 1 \\
0 & 0 & \dots & 0 & 0 
\end{pmatrix}.  \end{equation}
In this case we have $f|_{\chi_{d_\text{eff},e}^d} = f \circ A$.
When we consider that the function we try to approximate is of the form $f \circ A$, we get the following lemma.
\begin{lemma}[Optimal approximation order bi-activation networks: low-dimensional single subspace]\label{lem:ssubspaces}
For function $f(x)$ with $||f||_{W^{n,\infty}([0,1]^d)}\le 1$ where $x$ is restricted to a single canonical subspace $\chi_{d_\text{eff},e}^d$,  there exists $M_{\mathcal{C}_S,bi-\sigma}(a)$ formed as a feed-forward network with $\mathcal C_S=C_6(d,n)\epsilon^{-d_\text{eff}/n}$ elements $a\in\RR^{\mathcal C_S}$ for which 
\[
\min_{a\in\RR^{\mathcal C_S}}\max_{x\in \chi_{d_\text{eff},e}^d} |f(x)-M_{\mathcal C_S,bi-\sigma}(a)(x)|\le \epsilon
\]
where $C_6(d,n)$ may depend on $d$ and $n$, but not on $\epsilon$.
\end{lemma}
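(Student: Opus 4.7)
The plan is to reduce the single-subspace case to a standard $d_\text{eff}$-dimensional approximation problem and then invoke Theorem \ref{thm:main}. Because $e \in I^d_{d_\text{eff}}$ is known, we may as well assume $e = \{1,\dots,d_\text{eff}\}$ and that $A$ is the matrix in \eqref{eq:Asimple}; the general case follows by permuting coordinates, which is absorbed into the (fixed, $\epsilon$-independent) linear preprocessing layer. The key observation is that, restricted to $\chi^d_{d_\text{eff},e}$, the function $f$ is completely determined by its values on a $d_\text{eff}$-dimensional face of the unit cube, so the approximation problem is intrinsically $d_\text{eff}$-dimensional.

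First I would define $g : [0,1]^{d_\text{eff}} \to \mathbb{R}$ by $g(y) := f(Ay)$ and verify that $\|g\|_{W^{n,\infty}([0,1]^{d_\text{eff}})} \le 1$. This is immediate from the chain rule: for any multi-index $\mathbf{m} \in \{0,1,\dots\}^{d_\text{eff}}$ with $|\mathbf{m}| \le n$, the weak derivative $D^{\mathbf{m}} g(y)$ equals $D^{\tilde{\mathbf{m}}} f (Ay)$ where $\tilde{\mathbf{m}} \in \{0,\dots\}^d$ is the multi-index obtained by padding $\mathbf{m}$ with zeros in the coordinates outside $e$. Hence the $W^{n,\infty}$-norm of $g$ is dominated by that of $f$, so $g$ lies in the unit ball of $W^{n,\infty}([0,1]^{d_\text{eff}})$.

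Next I would apply Theorem \ref{thm:main} in dimension $d_\text{eff}$ to $g$, obtaining a bi-activation network $M_{\mathcal{C}_F,bi\text{-}\sigma}^{(g)}$ of size $\mathcal{C}_F = C_3(d_\text{eff},n)\,\epsilon^{-d_\text{eff}/n}$ such that $\max_{y \in [0,1]^{d_\text{eff}}} |g(y) - M_{\mathcal{C}_F,bi\text{-}\sigma}^{(g)}(y)| \le \epsilon$. The final network $M_{\mathcal{C}_S,bi\text{-}\sigma}$ of the lemma is then built by prepending to this network a single linear layer with no activation (equivalently, weights realising the projection $\mathbb{R}^d \to \mathbb{R}^{d_\text{eff}}$ that keeps only the coordinates in $e$). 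This preprocessing layer uses at most $d_\text{eff}$ nonzero weights, which is $O(1)$ in $\epsilon$ and can be absorbed into a new constant $C_6(d,n)$. For any $x \in \chi^d_{d_\text{eff},e}$ we have $x = A y$ with $y$ the vector of entries of $x$ indexed by $e$, and by construction $M_{\mathcal{C}_S,bi\text{-}\sigma}(x) = M_{\mathcal{C}_F,bi\text{-}\sigma}^{(g)}(y)$, so $|f(x) - M_{\mathcal{C}_S,bi\text{-}\sigma}(x)| = |g(y) - M_{\mathcal{C}_F,bi\text{-}\sigma}^{(g)}(y)| \le \epsilon$.

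No step is really hard here; the argument is a straightforward dimension reduction. The only delicate point is to justify that the added linear layer is compatible with the bi-activation architecture (the first hidden layer in Yarotsky's construction already accepts an arbitrary affine input, so no extra activation is needed) and to check that the constant $C_6(d,n)$ correctly depends on $d$ through the projection layer and on $d_\text{eff}, n$ through $C_3$; beyond that, the proof is essentially a change of variables plus an invocation of Theorem \ref{thm:main}.
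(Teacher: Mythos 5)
Your proposal is correct and follows essentially the same route as the paper's proof: both show $\|f\circ A\|_{W^{n,\infty}([0,1]^{d_\text{eff}})}\le 1$ via the chain rule (the paper does this by an explicit coordinate-by-coordinate computation using the structure of $A$) and then invoke Theorem \ref{thm:main} in dimension $d_\text{eff}$. Your extra remarks on the linear preprocessing layer are a harmless elaboration of a step the paper leaves implicit.
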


    We prove Lemma \ref{lem:ssubspaces}, by showing that $\|f\circ A\|_{W^{n,\infty}[0,1]^{d_\text{eff})}} \leq 1$ and then applying Theorem \ref{thm:main}.
    
\begin{proof}
    For a fixed $d, n \in \mathbb N $, $d_\text{eff} \in \mathbb N$ such that $d_\text{eff}<d $ and $\epsilon \in (0,1)$. We consider without loss of generality a $f$ and $A$ as prescribed, then by upper bounding $\|f \circ A\|_{W^{n,\infty}([0,1]^{d_\text{eff}})}$ by 1, we can apply Theorem \ref{thm:main}. We have for a $\textbf{n}$, with $|{\textbf{n}}|= n$ that
    \begin{align}
        &\mbox{ esssup }_{x\in [0,1]^{d_\text{eff}}} \left|D^{\textbf{n}}(f\circ A)(x)\right|\nonumber\\
        =& \mbox{ esssup }_{x\in [0,1]^{d_\text{eff}}} \left|\partial_{x_1}^{n_1}\partial_{x_2}^{n_2}\dots\partial_{x_{d_\text{eff}}}^{n_{d_\text{eff}}}(f\circ A)(x)\right|\nonumber\\
        =& \mbox{ esssup }_{x\in [0,1]^{d_\text{eff}}}\left|\partial_{x_1}^{n_1}\partial_{x_2}^{n_2}\dots\partial_{x_{d_\text{eff}}}^{n_{d_\text{eff}}-1}\sum_{i=1}^d\partial_{x_i}(f)(Ax)\cdot A_{i{d_\text{eff}}}\right|\nonumber\\
        =& \mbox{ esssup }_{x\in [0,1]^{d_\text{eff}}}\left|\partial_{x_1}^{n_1}\partial_{x_2}^{n_2}\dots\partial_{x_{d_\text{eff}}}^{n_{d_\text{eff}}-1}\partial_{x_{d_\text{eff}}}(f)(Ax)\cdot A_{{d_\text{eff}}{d_\text{eff}}}\right|\nonumber\\
        =& \mbox{ esssup }_{x\in [0,1]^{d_\text{eff}}} \left|\partial_{x_1}^{n_1}\partial_{x_2}^{n_2}\dots\partial_{x_{d_\text{eff}}}^{n_{d_\text{eff}}}(f)(Ax)\right|\label{eq:induc}\\
        =& \mbox{ esssup }_{x\in [0,1]^{d_\text{eff}}} |D^{\textbf{n}}(f)(Ax)|\nonumber\\
        \leq& \mbox{ esssup }_{x\in [0,1]^d} |D^{\textbf{n}}(f)(x)|\leq 1.\nonumber
    \end{align}
    Here in \eqref{eq:induc} we use the argument above $|\mathbf{n}|$ times.  Taking the maximum over $\mathbf n$ gives us that $$\|f \circ A\|_{W^{n,\infty}([0,1]^{d_\text{eff}})} \leq 1.$$
    To finish the proof we  apply Theorem \ref{thm:main}. 
\end{proof}
The reason we introduce the previous lemma is that for all canonical subspaces of dimension $d_\text{eff}$, we can assume without loss of generality that there exists a matrix $A$ of the form of \eqref{eq:Asimple}.

\begin{proof}[Proof of Theorem \ref{thm:subspaces}]
    For any $d, n, d_\text{eff} \in \mathbb N$ such that $d_\text{eff}<d $ and $\epsilon \in (0,1)$, we define  $\Hat f:[0,1]^d \to \R$, as $\Hat{f}|_{\bar{\chi}_{d_\text{eff},e}^d} = \Tilde f $, for all $e \in I$ where $\Tilde f$ is as in Lemma \ref{lem:ssubspaces}, and zero elsewhere.
    Then for $\Hat f$ we have:
    \begin{align}
       \sup\limits_{x \in \bar{\chi}^d_{d_\text{eff}}}|\Hat{f}(x) - f(x)|\leq& \sum_{e\in I}\sup_{x \in\chi_{d_\text{eff},e}^d}|\Hat{f}(x) - f(x)|\leq\frac{2^{d_\text{eff}}d_\text{eff}^n}{n!}\left(\frac{1}{N}\right)^n {d \choose d_\text{eff}}.\label{eq:fhatf}
    \end{align}
    Setting 
    \begin{equation*}\label{eq:N3}
	N = N(\epsilon, d, {d_\text{eff}}, n) =\left\lceil\left(\frac{ n! {d \choose {d_\text{eff}}}}{2^{d_\text{eff}}{d_\text{eff}}^n}\epsilon\right)^{-1/{n}}\right\rceil
    \end{equation*}
    and plugging $N$ in \eqref{eq:fhatf}, we get $\sup\limits_{x \in \chi^d_{d_\text{eff}}}|\Hat{f}(x) - f(x)|\leq \epsilon$.
   Furthermore, by Lemma \ref{lem:ssubspaces} $\Tilde{f}$ can be implemented as a feed-forward network $M_{\mathcal C_S,bi-\sigma}(a)(x)$. Then $\Hat f$ can be formed as the product of these networks, which results in a total feed-forward network $M_{\mathcal C_M,bi-\sigma}(a)(x)$, where $$\mathcal C_M = {d \choose {d_\text{eff}}}{d_\text{eff}}^n(N+1)^{d_\text{eff}}=C_5(d, {d_\text{eff}}, n)\epsilon^{-{d_\text{eff}}/n},$$ which finishes our proof.
\end{proof}
\begin{remark}
    Although the networks in the case of Lemma \ref{lem:ssubspaces} and Theorem \ref{thm:subspaces} have the same $\epsilon$ functional dependence in their number of parameters, the total size of the network $\mathcal{C_S}$ and $\mathcal{C_M}$ will be different, as they also depend in a different way on $d, d_\text{eff}$ and $n$. 
\end{remark}

\section{Conclusions}

We have shown that bi-activation networks, which use both the ReLU and $x^2$ as activation functions, have greater approximation power than ReLU networks. By repurposing a proof of \cite{yarotsky} for ReLU networks, we have derived upper bounds for the number of parameters needed by bi-activation networks to approximate functions in the unit ball of the Sobolev space $W^{n,\infty}([0,1]^d)$ achieving the optimal order $O(\epsilon^{-d/n})$ number of parameters as lower bounded by \cite{devore}.  We also extended our result to analytic functions on $[0,1]^d$ for yet superior $\epsilon$ dependence and to low-dimensional subspaces to overcome the curse of dimensionality.  

Natural extensions of these results are 1) to determine if a feedforward, or another network, with a single nonlinear activation can achieve the optimal order $O(\epsilon^{-d/n})$ number of parameters, and 2) to consider further low-complexity models of $f(x)$ beyond the union of subspaces, see for instance the nested structure considered in \cite{poggio}.

\section*{Acknowledgments}VG and JH would like to thank the EPSRC Centre for Doctoral Training in Mathematics of Random Systems: Analysis, Modelling and Simulation (EP/S023925/1) for its support.  JT is supported by the Hong Kong Innovation and Technology Commission (InnoHK Project CIMDA) and thanks UCLA Department of Mathematics for kindly hosting him during the completion of this manuscript.

\nocite{*}
\bibliographystyle{plain} 
\bibliography{main}

\section*{Appendix}

\begin{proof}[Proof of Lemma \ref{approximationlemma}] \label{proofth1}
Begin by defining a partition of unity $\phi_{\mathbf{m}}$ on the domain $[0,1]^d$:

$$
\sum_{\mathbf{m}} \phi_{\mathbf{m}}(\mathbf{x}) \equiv 1, \quad \mathbf{x} \in[0,1]^d
$$
Here $\mathbf{m} = (m_1,\dots,m_d) \in\{0,1, \ldots, N\}^d$, and $\phi_{\mathbf{m}}$ is defined as
$$
\phi_{\mathbf{m}}(\mathbf{x})=\prod_{k=1}^d \psi\left(3 N\left(x_k-\frac{m_k}{N}\right)\right),
$$
where
$$
\psi(x)= \begin{cases}1, & |x|<1 \\ 0, & 2<|x| \\ 2-|x|, & 1 \leq|x| \leq 2.\end{cases}
$$
Furthermore, note that  $||\psi||_\infty = 1$ and $||\phi_{\textbf{m}}||_{\infty} = 1$ for all ${\textbf{m}}$, and that 
\[
	\mbox{supp } \phi_{\textbf{m}} \subseteq \left\{x:\bigg|x_k-\frac{m_k}{N}\bigg|<\frac{1}{N}\forall k\right\}.
\]

For any ${\textbf{m}}\in \{0,...,N\}^d$, consider the degree$-(n-1)$ Taylor polynomial for the function $f$ at $\textbf{x}={\textbf{m}}/N$:
\[
	P_\textbf{m}(x) = \sum_{{\textbf{n}:|\textbf{n}|<}n} \frac{D^{{\textbf{n}}}f}{\textbf{n}!}\bigg|_{x={\textbf{m}}/N} \left(x-\frac{{\textbf{m}}}{N}\right)^{\textbf{n}},
\]
with the usual conventions ${\textbf{n}}!=\prod_{k=1}^dn_k!$ and $(x-\frac{{\textbf{m}}}{N})^{\textbf{n}}=\prod_{k=1}^d\left(x_k-\frac{m_k}{N}\right)^{n_k}$. Now define an approximation to $f$ by \begin{equation*}\label{taylorexp}
    f_1 = \sum_{{\textbf{m}} \in \{0,...,N\}^d}\phi_{\textbf{m}}P_{\textbf{m}}.
\end{equation*}

We bound the approximation error using the Taylor expansion of $f$:
\begin{align*}
	|f(x)-f_1(x)|&= \left|\sum_{{\textbf{m}}}\phi_{\textbf{m}}(x)\big(f(x)-P_{\textbf{m}}(x)\big)\right| \\
	&\leq \sum_{{\textbf{m}}:|x_k-m_k/N|<1/N\, \forall k} |f(x)-P_{\textbf{m}}(x)| \\
	&\leq 2^d \max_{{\textbf{m}}:|x_k-{m_k}/{N}|<\frac{1}{N}\forall k} |f(x)-P_{\textbf{m}}(x)| \\
	&\leq \frac{2^dd^n}{n!}\left(\frac{1}{N}\right)^n \max_{{\textbf{n}}:|{\textbf{n}}|=n}\mbox{esssup }_{x\in [0,1]^d} |D^{\textbf{n}}f(x)|
\end{align*}

 In the second step, we used the support property for $\phi_{\textbf{m}}$ and the uniform bound on its supremum norm. In the third step, we used the observation that any $x\in [0,1]^d$ belongs to the support of at most $2^d$ functions $\phi_{\textbf{m}}$, in the fourth a standard bound for the Taylor remainder.

Note that, the coefficients of the polynomials $P_{\textbf{m}}$ are uniformly bounded for all $f$:

\[
	P_{\textbf{m}}(x)=\sum_{\textbf{n}:|{\textbf{n}}|<n}a_{{\textbf{m}},\textbf{n}} \left(x-\frac{{\textbf{m}}}{N}\right)^{\textbf{n}}, \quad |a_{{\textbf{m}},\textbf{n}}|\leq 1.
\]
Expanding $f_1$ as follows
\[
	f_1(x) = \sum_{{\textbf{m}}\in \{0,...,N\}^d} \sum_{{\textbf{n}}: |{\textbf{n}}| < n} a_{{\textbf{m}},\textbf{n}}\phi_{\textbf{m}}(x)\left(x-\frac{{\textbf{m}}}{N}\right)^{\textbf{n}}.
\]
completes the proof.

\end{proof}






\end{document}